\newtheorem{theorem}{Theorem}
\newtheorem{proposition}{Proposition}
\newcommand{\eg}{e.\,g., }
\newcommand{\ie}{i.\,e., } 
\DeclareMathOperator*{\argmin}{arg\,min}
\begin{document}
%
\title{Permute Me Softly: Learning Soft Permutations for Graph Representations}
%
%
%
%

\author{Giannis~Nikolentzos,
        George~Dasoulas,
        and~Michalis~Vazirgiannis
\IEEEcompsocitemizethanks{\IEEEcompsocthanksitem G. Nikolentzos, G. Dasoulas and M. Vazirgiannis are with the Computer Science Laboratory, \'Ecole Polytechnique, Palaiseau 91120, France.\protect\\
E-mail: nikolentzos@lix.polytechnique.fr}
\thanks{Manuscript received April 19, 2005; revised August 26, 2015.}}

%
%

\markboth{}%
{Shell \MakeLowercase{\textit{et al.}}: Bare Demo of IEEEtran.cls for Computer Society Journals}
%



\IEEEtitleabstractindextext{%
\begin{abstract}
Graph neural networks (GNNs) have recently emerged as a dominant paradigm for machine learning with graphs.
Research on GNNs has mainly focused on the family of message passing neural networks (MPNNs).
Similar to the Weisfeiler-Leman (WL) test of isomorphism, these models follow an iterative neighborhood aggregation procedure to update vertex representations, and they next compute graph representations by aggregating the representations of the vertices.
Although very successful, MPNNs have been studied intensively in the past few years.
Thus, there is a need for novel architectures which will allow research in the field to break away from MPNNs.
In this paper, we propose a new graph neural network model, so-called $\pi$-GNN which learns a ``soft'' permutation (\ie doubly stochastic) matrix for each graph, and thus projects all graphs into a common vector space.
The learned matrices impose a ``soft'' ordering on the vertices of the input graphs, and based on this ordering, the adjacency matrices are mapped into vectors.
These vectors can be fed into fully-connected or convolutional layers to deal with supervised learning tasks.
In case of large graphs, to make the model more efficient in terms of running time and memory, we further relax the doubly stochastic matrices to row stochastic matrices.
We empirically evaluate the model on graph classification and graph regression datasets and show that it achieves performance competitive with state-of-the-art models.
\end{abstract}

\begin{IEEEkeywords}
Graph Neural Networks, Permutation Matrices, Graph Representations.
\end{IEEEkeywords}}

\maketitle

\IEEEdisplaynontitleabstractindextext

%
\IEEEpeerreviewmaketitle

\IEEEraisesectionheading{\section{Introduction}\label{sec:introduction}}

%
%
%
%
\IEEEPARstart{G}{raphs} arise naturally in many contexts including chemoinformatics, bioinformatics and social networks.
For instance, in chemistry, molecules can be modeled as graphs where vertices and edges represent atoms and chemical bonds, respectively.
A social network is usually represented as a graph where users are mapped to vertices and edges capture friendship relationships.
Due to the recent growth in the amount of produced graph-structured data, the field of graph representation learning has attracted a lot of attention in the past years with applications ranging from drug design \cite{kearnes2016molecular} to learning the dynamics of physical systems \cite{pfaff2020learning}.

Among the different algorithms proposed in the field of graph representation learning, graph neural networks (GNNs) have recently shown significant success in solving real-world learning problems on graphs.
It is probably not exaggerated to say that the large majority of the GNNs proposed in the past years are all variants of a more general framework called Message Passing Neural Networks (MPNNs) \cite{gilmer2017neural}.
Indeed, with the exception of a few architectures \cite{niepert2016learning,maron2018invariant,maron2019provably,nikolentzos2020random,toenshoff2021graph}, all the remaining models belong to the family of MPNNs.
For a certain number of iterations, these models update the representation of each vertex by aggregating information from its neighborhood.
In fact, there is a close connection between this update procedure and the Weisfeiler-Leman test of isomorphism \cite{weisfeiler1968reduction}.
To generate a representation for the entire graph, MPNNs usually apply some permutation invariant readout function to the updated representations of the vertices.
MPNNs have been studied intensively in the past years, thus it comes as no surprise that even their expressive power has been characterized \cite{xu2019how,morris2019weisfeiler,morris2020weisfeiler}.

Current research on graph representation learning seems to be dominated by MPNN models.
Therefore, there is a need to develop novel architectures which will allow research in the field to break away from the message passing schemes which have been extensively studied in the past years.
This paper takes a step towards this direction.
Specifically, we first highlight some general limitations of approaches that project graphs into vector spaces.
We consider a well-established distance measure for graphs, and we show that in the general case, there is no inner product space such that the distance induced by the norm of the space matches exactly the considered distance function.
We also show that, for specific classes of graphs, such representations can be generated by imposing some ordering on the vertices of each graph.
The above result motivates the design of a novel neural network model, so-called $\pi$-GNN which learns a ``soft'' permutation (\ie doubly stochastic) matrix for each graph, and thus projects all graphs into a common vector space.
The learned matrices impose a ``soft'' ordering on the vertices of the graphs, and based on this ordering, the adjacency matrices are mapped into vectors.
These vectors can then be fed into fully-connected or convolutional layers to deal with supervised learning tasks.
To make the model more efficient in terms of running time and memory, we further relax the doubly stochastic matrices to row stochastic matrices.
We compare the performance of the proposed model to well-established neural architectures on several benchmark datasets for graph classification and graph regression.
Results show that the proposed model matches or outperforms competing methods.
Our main contributions are summarized as follows:
\begin{itemize}
    \item  We demonstrate that graph embedding algorithms such as GNNs, cannot isometrically embed a metric space whose metric corresponds to a widely accepted distance function for graphs into a vector space.
    This is, however, possible for specific classes of graphs.
    \item We propose a novel neural network model, $\pi$-GNN, which learns a ``soft'' permutation matrix for each graph and uses this matrix to project the graph into a vector space.
    The set of permutation matrices induces an alignment of the input graphs, and thus graphs are mapped into a common space.
    \item We evaluate the proposed model on several graph classification and graph regression datasets where it achieves performance comparable and in some cases better than that of state-of-the-art GNNs.
\end{itemize}

The rest of this paper is organized as follows.
Section~\ref{sec:related_work} provides an overview of the related work.
Section~\ref{sec:expressiveness} highlights the limitations of graph embedding approaches.
Section~\ref{sec:contribution} provides a detailed description of the proposed $\pi$-GNN model.
Section~\ref{sec:experiments} evaluates the proposed model in graph classification and graph regression tasks.
Finally, Section~\ref{sec:conclusion} concludes.

\section{Related Work}\label{sec:related_work}
The first graph neural network (GNN) models were proposed several years ago \cite{sperduti1997supervised,micheli2009neural,scarselli2009graph}, however, it is not until recently that these models started to attract considerable attention.
The architectures that helped to make the field active include modern variants of the old models \cite{li2016gated} as well as approaches that generalized the convolution operator to graphs based on well-established graph signal processing concepts \cite{bruna2014,defferrard2016convolutional,kipf2016semi}.
Though motivated differently, it was later shown that all these models are instances of a general message passing framework (MPNNs) \cite{gilmer2017neural} which consists of two phases.
First, a message passing phase where vertices iteratively update their feature vectors by aggregating the feature vectors of their neighbors.
Second, a readout phase where a permutation invariant function is employed to produce a feature vector for the entire graph.
MPNNs have been extensively studied in the past years and there is now a large body of research devoted to these models \cite{zhang2018end,xu2019how,murphy2019relational}.
Research has focused mainly on the message passing phase \cite{pgso2021}, but also on the readout phase though in a smaller extent \cite{such2017robust,ying2018hierarchical}.
The family of MPNNs is closely related to the Weisfeiler-Lehman (WL) isomorphism test \cite{weisfeiler1968reduction}.
Specifically, these models generalize the relabeling procedure of the WL test to the case where vertices are annotated with continuous features.
Standard MPNNs have been shown to be at most as powerful as the WL test in distinguishing non-isomorphic graphs \cite{xu2019how,morris2019weisfeiler}.
Some works aggregate other types of structures instead of neighbors such as small subgraphs \cite{lee2019graph} or paths \cite{chen2020convolutional}.
Considerable efforts have also been devoted to building deeper MPNNs \cite{li2019deepgcns,gallicchio2020fast}, more powerful models \cite{morris2019weisfeiler,morris2020weisfeiler,you2021identity}, and models that can capture the position of a vertex with respect to all other vertices in the graphs \cite{you2019position}.
There have also been made some efforts to develop GNN models that do not follow the design paradigm of MPNNs.
For instance, Niepert \textit{et al.} proposed a model that extracts neighborhood subgraphs for a subset of vertices, imposes an ordering on each subgraph's vertices, and applies a convolutional neural network on the emerging matrices \cite{niepert2016learning}.
Maron \textit{et al.} proposed $k$-order graph networks, a general form of neural networks that consist of permutation equivariant and invariant tensor operations.
Instances of these models correspond to a composition of functions, typically a number of equivariant linear layers and an invariant linear layer which is followed by a multi-layer perceptron \cite{maron2018invariant,maron2019provably}.
Nikolentzos and Vazirgiannis proposed a model that generates graph representations by comparing the input graphs against a number of latent graphs using random walk kernels \cite{nikolentzos2020random}.

The works closest to ours are the ones reported in \cite{bai2019learning} and in \cite{bai2020learning}.
In these works, the authors map input graphs into fixed-sized aligned grid structures.
To achieve that, they align the vertices of each graph to a set of prototype representations.
To obtain these prototype representations, the authors apply the $k$-means algorithm to the vertices of all graphs and each emerging centroid is considered as a prototype.
They then compute for each input graph a binary correspondence matrix by assigning each vertex of the graph to its closest centroid.
Based on this matrix, the authors produce a new graph (\ie a new adjacency matrix and a new matrix of features), while they also impose an ordering on the vertices of the new graph using some centrality measure.
Finally, these matrices are fed into an MPNN model which is followed by a convolutional neural network to generate the output.
Unfortunately, these models are not end-to-end trainable.
Mapping input graphs into aligned grid structures involves non-differentiable operations and is thus applied as a preprocessing step.
Furthermore, for large datasets, partitioning the vertices of all graphs into clusters using the $k$-means algorithm can be computationally expensive.
On the other hand, the proposed model is better motivated and is end-to-end trainable since it employs a differentiable layer to compute a matrix of ``soft'' correspondences.
Our work is also related to neural network models that learn to compare graphs to each other \cite{bai2018convolutional,bai2019simgnn,wang2019learning,santa2018visual}.

\section{Can We Generate Expressive Graph Representations?}\label{sec:expressiveness}
As mentioned above, in several application areas, samples do not come in the form of fixed-sized vectors, but in the form of graphs.
For instance, in chemistry, molecules are typically represented as graphs, and in social network analysis, collaboration patterns are also mapped into graph structures.
There exist several approaches that map graphs into vectors, however, in most cases, the emerging vectors could be of low quality, could be difficult to obtain, and they may fail to capture the full complexity of the underlying graph objects.
However, vector representations are very convenient since most well-established learning algorithms can operate on this type of data.
These are developed in inner product spaces or normed spaces, in which the inner product or norm defines the corresponding metric.
Ideally, we would like to project a collection of graphs into a Euclidean space such that some well-established notion of distance between graphs is captured as accurately as possible from the Euclidean norm of that space.

Before introducing the distance function, we present some key notation for graphs.
Let $[n] = \{1,\ldots,n\} \subset \mathbb{N}$ for $n \geq 1$.
Let $G = (V,E)$ be an undirected graph, where $V$ is the vertex set and $E$ is the edge set.
We will denote by $n$ the number of vertices and by $m$ the number of edges.
The adjacency matrix $\mathbf{A} \in \mathbb{R}^{n \times n}$ is a symmetric matrix used to encode edge information in a graph.
We say that two graphs $G_1$ and $G_2$ are isomorphic to each other, \ie $G_1 \simeq G_2$, if there exists an adjacency preserving bijection $\pi \colon V_1 \rightarrow V_2$, \ie $(u,v)$ is in $E_1$ if and only if $(\pi(u), \pi(v))$ is in $E_2$, call $\pi$ an isomorphism from $G_1$ to $G_2$. 

The distance function that we consider in this paper is the \emph{Frobenius distance} \cite{grohe2018graph}, one of the most well-studied distance measures for graphs.
Let $G_1= (V_1,E_1)$ and $G_2= (V_2,E_2$) be two graphs on $n$ vertices with respective $n \times n$ adjacency matrices $\mathbf{A}_1$ and $\mathbf{A}_2$.
The \emph{Frobenius distance} is a function $d: \mathcal{G} \times \mathcal{G} \rightarrow \mathbb{R}$ where $\mathcal{G}$ is the space of graphs which quantifies the distance of two graphs and can be expressed as the following minimization problem:
\begin{equation}
  d(G_1, G_2) = \min_{\mathbf{P} \in \Pi} ||  \mathbf{A}_1 - \mathbf{P} \, \mathbf{A}_2 \, \mathbf{P}^\top ||_F \, ,
  \label{eq:distance}
\end{equation}
where $\Pi$ denotes the set of $n \times n$ permutation matrices, and $||\cdot||_F$ is the Frobenius matrix norm.
For clarity of presentation we assume $n$ to be fixed (\ie both graphs consist of $n$ vertices).
In order to apply the function to graphs of different cardinalities, one can append zero rows and columns to the adjacency matrix of the smaller graph to make its number of rows and columns equal to $n$.
Therefore, the problem of graph comparison can be reformulated as the problem of minimizing the above function over the set of permutation matrices.
A permutation matrix $\mathbf{P}$ gives rise to a bijection $\pi : V_1 \rightarrow V_2$.
The function defined above seeks for a bijection such that the number of common edges $|\{ (u,v) \in E_1 : \big(\pi(u),\pi(v)\big) \in E_2 \}|$ is maximized.
The above definition is symmetric in $G_1$ and $G_2$.
The two graphs are isomorphic to each other if and only if there exists a permutation matrix $\mathbf{P}$ for which the above function is equal to $0$.
Unfortunately, the above distance function is not computable in polynomial time \cite{grohe2018graph,arvind2012approximate}.
Furthermore, computing the distance remains hard even if both input graphs are trees \cite{grohe2018graph}.
Its high computational cost prevents the above distance function from being used in practical scenarios.
An interesting question that we answer next is how well graph embedding algorithms, \ie approaches that map graphs into vectors, can embed the space of graphs equipped with the above function into a vector space.
It turns out that for an arbitrary collection of graphs, isometrically embedding the aforementioned metric space into a vector space is not feasible.
\begin{theorem}
  Let $(\mathcal{G}, d)$ be a metric space where $\mathcal{G}$ is the space of graphs and $d$ is the distance defined in Equation~\eqref{eq:distance}.
  The above metric space cannot be embedded in any Euclidean space.
  \label{thm:eucl}
\end{theorem}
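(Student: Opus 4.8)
The plan is to prove the statement by a counting/packing argument: no finite‑dimensional Euclidean space has enough ``room'' to carry an isometric copy of the whole graph space. So I would assume there is an isometric embedding $\phi\colon(\mathcal{G},d)\to\mathbb{R}^{D}$ for some fixed $D$, and derive a contradiction by restricting attention to the subfamily $\mathcal{G}_{n}\subseteq\mathcal{G}$ of isomorphism classes of graphs on $n$ vertices, with $n$ large.

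The first ingredient is two elementary estimates on $d$ restricted to $\mathcal{G}_{n}$. Since $\mathbf{A}_{1}-\mathbf{P}\mathbf{A}_{2}\mathbf{P}^{\top}$ is a symmetric $\{-1,0,1\}$-matrix whose squared Frobenius norm equals twice the number of unordered pairs on which $G_{1}$ and the relabelled $G_{2}$ disagree, one gets $d(G_{1},G_{2})^{2}=2\min_{\pi}|E_{1}\triangle\pi(E_{2})|$. For non‑isomorphic $G_{1},G_{2}$ this inner quantity is a positive integer, so $d(G_{1},G_{2})\ge\sqrt{2}$; and it is at most $\binom{n}{2}$, the number of edge slots, so $d(G_{1},G_{2})\le\sqrt{2\binom{n}{2}}<n$. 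Hence $\mathcal{G}_{n}$ is a metric space of minimum separation $\ge\sqrt{2}$ and diameter $<2n$. The second ingredient is the crude bound $|\mathcal{G}_{n}|\ge 2^{\binom{n}{2}}/n!=2^{\Omega(n^{2})}$, since each graph on the vertex set $[n]$ has at most $n!$ relabellings.

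Now I would push $\mathcal{G}_{n}$ through $\phi$: its image is a set of $|\mathcal{G}_{n}|$ points of $\mathbb{R}^{D}$ that are pairwise at Euclidean distance $\ge\sqrt{2}$ and contained in a ball of radius $<2n$. The open Euclidean balls of radius $\sqrt{2}/2$ centred at these points are pairwise disjoint and all sit inside a ball of radius $<2n+\sqrt{2}/2$, so comparing Lebesgue volumes in $\mathbb{R}^{D}$ gives $|\mathcal{G}_{n}|\le\bigl(1+2\sqrt{2}\,n\bigr)^{D}=O(n^{D})$. Together with $|\mathcal{G}_{n}|=2^{\Omega(n^{2})}$ this is a contradiction once $n$ is large enough (with $D$ fixed), which finishes the proof.

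The only delicate points above are the two estimates inside $\mathcal{G}_{n}$ and the standard ball‑packing inequality; nothing else is needed. If one wants the stronger conclusion that $(\mathcal{G},d)$ does not even embed into an infinite‑dimensional inner‑product (Hilbert) space, the packing argument fails and I would instead invoke Schoenberg's criterion---$(\mathcal{G},d)$ is Hilbert‑embeddable iff $d^{2}$ is of negative type, i.e.\ $\sum_{i,j}c_{i}c_{j}\,d(G_{i},G_{j})^{2}\le0$ whenever $\sum_{i}c_{i}=0$---and exhibit a finite set of graphs violating this inequality. There the real obstacle appears: the violation has to be realised by genuine graphs, and if the optimal permutations in~\eqref{eq:distance} were always trivial then $d^{2}$ would be a scaled Hamming (hence $\ell_{1}$, hence negative‑type) metric, so the counterexample must be engineered so that the permutation minimisation is genuinely active---that construction is the part I expect to require the most care.
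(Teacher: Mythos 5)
Your packing argument is correct as far as it goes, but it takes a genuinely different route from the paper and proves a slightly weaker conclusion. The paper exhibits a concrete set of five small graphs, writes down their $5\times 5$ matrix of squared Frobenius distances $\boldsymbol{\Delta}^2$, double-centres it to $\mathbf{K}=-\tfrac12\mathbf{J}\boldsymbol{\Delta}^2\mathbf{J}$, and observes numerically that $\mathbf{K}$ has a negative eigenvalue, hence cannot be a Gram matrix. This is exactly the Schoenberg / classical-MDS criterion you mention in your last paragraph, and it rules out isometric embedding into \emph{any} inner-product space, finite- or infinite-dimensional. Your main argument instead fixes $D$, restricts to the $\geq 2^{\binom{n}{2}}/n!$ isomorphism classes on $n$ vertices, uses $d^2=2\min_\pi|E_1\triangle\pi(E_2)|$ to get a $\sqrt{2}$ separation and an $O(n)$ diameter, and kills a hypothetical embedding into $\mathbb{R}^D$ by a volume-packing count ($2^{\Omega(n^2)}$ vs.\ $O(n^D)$). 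This is clean and self-contained, avoids any explicit counterexample or eigenvalue computation, and establishes non-embeddability into every \emph{finite-dimensional} Euclidean space, which is what the theorem literally states; but it does not address the Hilbert-space case that the paper's introduction also claims and its proof actually handles. You correctly identify that the finite-set/Schoenberg counterexample is what would be needed for the stronger statement, and your remark that such a counterexample must make the permutation minimisation genuinely active (otherwise $d^2$ is a scaled Hamming, hence $\ell_1$, hence negative-type metric) is exactly the right obstruction -- the paper resolves it by simply producing five graphs and checking the eigenvalue numerically rather than by a structural argument.
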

Unfortunately, most machine learning algorithms that operate on graphs map graphs explicitly or implicitly into vectors (\eg the readout function of MPNNs).
The above result suggests that these learning algorithms cannot generate maximally expressive graph representations, \ie vector representations such that the Euclidean distances between the different graphs are arbitrarily close (or equal) to those produced by the distance function defined in Equation~\eqref{eq:distance}.

The above result holds for the general case (\ie arbitrary collections of graphs).
If we restrict our set to contain instances of only specific classes of graphs, then, it might be possible to map graphs into vectors such that the pairwise Euclidean distances are equal to those that emerge from Equation~\eqref{eq:distance}.
For instance, if our set consists only of paths or of cycles, we can indeed project them into a vector space such that the Euclidean norm induces the distance defined in Equation~\eqref{eq:distance}.
To obtain such representations, the trick is to impose an ordering on the vertices of each graph such that those orderings are consistent across graphs.
For example, in the case of the path graphs, we can impose the following ordering: the first vertex is one of the two terminal vertices.
Let $v$ denote that vertex.
Then, the $i$-th vertex is uniquely defined and corresponds to the vertex $u$ which satisifies $\text{sp}(v, u) = i-1$ where $\text{sp}(\cdot, \cdot)$ denotes the shortest path distance between two vertices.
Let $n$ denote the number of vertices of the longest path in the input set of graphs.
The adjacency matrix of each graph $G_i$ is then expanded by zero-padding such that $\mathbf{A}_i \in \mathbb{R}^{n \times n}$.
For any pair of graphs $G_1, G_2$, we then have:
\begin{equation*}
  \begin{split}
    d(G_1, G_2) = \min_{\mathbf{P} \in \Pi} ||  \mathbf{A}_1 - \mathbf{P} \, \mathbf{A}_2 \, \mathbf{P}^\top ||_F &= ||  \mathbf{A}_1 - \mathbf{I}_n \, \mathbf{A}_2 \, \mathbf{I}_n^\top ||_F \\
    &= ||  \mathbf{A}_1 - \mathbf{A}_2 ||_F \, ,
  \end{split}
\end{equation*}
where $\mathbf{I}_n$ denotes the $n \times n$ identity matrix. 
Clearly, these graphs can be embedded into vectors in some Euclidean space as follows: $\mathbf{v}_i^{\text{adj}} = \text{vec}(\mathbf{A}_i)$ where $\text{vec}$ denotes the vectorization operator which transforms a matrix into a vector by stacking the columns of the matrix one after another, and $\mathbf{v}_i^{\text{adj}} \in \mathbb{R}^{n^2}$.
Then, the distance between two graphs is computed as:
\begin{equation*}
  d(G_1, G_2) = ||  \mathbf{A}_1 - \mathbf{A}_2 ||_F = ||  \mathbf{v}_1^{\text{adj}} - \mathbf{v}_2^{\text{adj}} ||_2 \, ,
\end{equation*}
where $|| \cdot ||_2$ is the standard $\ell_2$ norm of the input vector.
The emerging vectors can thus be though of as the representations of the path graphs in the Euclidean space $\mathbb{R}^{n^2}$.
But still, as shown next, it turns out that we cannot map these graphs into a vector space whose dimension is smaller than the cardinality of the set itself.
\begin{proposition}
  Let $\{ P_2, P_3, \ldots, P_{n+1}\}$ be a collection of $n$ path graphs, where $P_i$ denotes the path graph consisting of $i$ vertices.
  Let also $\mathbf{X} \in \mathbb{R}^{n \times (n+1)^2}$ be a matrix that contains the vector representations of the $n$ graphs obtained as discussed above (\ie the $i$-th row of matrix $\mathbf{X}$ contains the representation of the $i$-th graph).
  Then, the rank of matrix $\mathbf{X}$ is equal to $n$.
\end{proposition}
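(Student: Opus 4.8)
The plan is to make the rows of $\mathbf{X}$ explicit and then exhibit an $n \times n$ submatrix that is triangular with a nonzero diagonal. Under the ordering described above (start at a terminal vertex, then list vertices by increasing shortest-path distance from it) and after zero-padding to size $(n+1) \times (n+1)$, the adjacency matrix of the path $P_i$ satisfies $(\mathbf{A}_{P_i})_{k,k+1} = (\mathbf{A}_{P_i})_{k+1,k} = 1$ for $k = 1, \ldots, i-1$ and is $0$ elsewhere. Consequently, in the row of $\mathbf{X}$ corresponding to $P_i$, namely $\mathrm{vec}(\mathbf{A}_{P_i})$, the coordinate associated with the matrix position $(j-1,j)$ equals $1$ exactly when the edge $\{j-1,j\}$ is present in $P_i$, i.e. when $j \le i$, and equals $0$ otherwise.

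First I would select, for each $j \in \{2,\ldots,n+1\}$, the column of $\mathbf{X}$ indexed by the matrix position $(j-1,j)$; this singles out $n$ columns. Listing the rows of $\mathbf{X}$ in the order $P_2, P_3, \ldots, P_{n+1}$ and restricting to these $n$ columns yields an $n \times n$ matrix whose $(i,j)$ entry is $1$ if $j \le i$ and $0$ otherwise. This matrix is lower triangular with every diagonal entry equal to $1$, hence invertible; therefore $\mathrm{rank}(\mathbf{X}) \ge n$. Since $\mathbf{X}$ has exactly $n$ rows, $\mathrm{rank}(\mathbf{X}) \le n$ as well, and the two bounds give $\mathrm{rank}(\mathbf{X}) = n$.

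An equivalent argument I would mention is to note that $\mathrm{vec}(\mathbf{A}_{P_2})$ together with the successive differences $\mathrm{vec}(\mathbf{A}_{P_i}) - \mathrm{vec}(\mathbf{A}_{P_{i-1}})$ for $i = 3, \ldots, n+1$ are nonzero vectors with pairwise disjoint supports — each such difference is supported precisely on the two coordinates of the single new edge appended when $P_{i-1}$ is extended to $P_i$ — and are therefore linearly independent; since they are obtained from the rows of $\mathbf{X}$ by an invertible (unit lower-triangular) linear transformation, the rows of $\mathbf{X}$ are themselves linearly independent, giving the same conclusion.

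I do not expect a genuine obstacle here: the only thing needing care is the index bookkeeping — path lengths run over $2,\ldots,n+1$, the adjacency matrices are padded to dimension $n+1$, so there are exactly $n$ candidate ``new-edge'' columns — and checking that, once rows and columns are listed in the matching order, the chosen $n \times n$ submatrix is genuinely triangular (a permuted-triangular form would still suffice, but the clean statement is nicer to record).
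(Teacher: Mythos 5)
Your proof is correct and rests on essentially the same idea as the paper's: the paths are nested, each $P_{i+1}$ adding exactly one new edge to $P_i$, so the rows of $\mathbf{X}$ have a staircase structure. The paper formalizes this by subtracting consecutive rows and observing that the resulting difference vectors have pairwise disjoint supports (hence are linearly independent), which is precisely your ``equivalent argument''; your primary presentation instead extracts the $n$ ``new edge'' coordinates $(j-1,j)$ and exhibits a unit lower-triangular $n\times n$ minor, a slightly tidier way to record the same fact and one that sidesteps the index bookkeeping the paper's write-up fumbles.
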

This is another negative result since it implies that even in cases where we can impose an ordering on the vertices of the graphs which can be used to align the graphs, we cannot project them into a Euclidean space of fixed dimension and retain all pairwise distances.
Note that the proof of Theorem $1$ and Proposition $1$ presented in this section have been moved to the Appendix to improve paper's
readability.

\section{$\pi$-Graph Neural Networks}\label{sec:contribution}
Graph-level machine learning problems are usually associated with finite sets of graphs $\{G_1,\ldots,G_N\} \subset \mathcal{G}$.
According to Theorem~\ref{thm:eucl}, we cannot map those graphs to any Euclidean space such that the pairwise Frobenius distances of graphs are preserved.
But, can we find a mapping to the $n^2$-dimensional space that provides the best approximation to the Frobenius distances?
The objective in this case would be to find a permutation matrix $\mathbf{P}_i^*$ for each graph $G_i$ of the dataset where $i \in [N]$ such that the overall distance between graphs is minimized.
This gives rise to the following problem:
\begin{equation}
  \mathbf{P}_1^*, \ldots, \mathbf{P}_N^* = \argmin_{\mathbf{P}_1, \ldots, \mathbf{P}_N \in \Pi} \sum_{i=1}^N \sum_{j=1}^N ||  \mathbf{P}_i \, \mathbf{A}_i \, \mathbf{P}_i^\top - \mathbf{P}_j \, \mathbf{A}_j \, \mathbf{P}_j^\top ||_F \, .
  \label{eq:alignment}
\end{equation}
The above optimization problem imposes an ordering on the vertices of all graphs of the input set and thus each graph $G_i$ can then be embedded into a common Euclidean space as $\mathbf{v}_i^{\text{adj}} = \text{vec}(\mathbf{P}_i^* \, \mathbf{A}_i \, \mathbf{P}_i^{*\top})$.
Unfortunately, solving the above problem is hard.
Note that the distance function defined in Equation~\eqref{eq:distance} is a special case of the above problem when the number of samples is $N=2$.
Furthermore, it is clear that for each pair of graphs $G_i,G_j$ with $i,j \in [N]$, the distance function of Equation~\eqref{eq:distance} is a lower bound to the distances that emerge from the above permutation matrices.
Thus, for any two graphs $G_i, G_j$, we have:
\begin{equation}\label{eq:dist_ineq}
  \begin{split}
    d(G_i, G_j) &= \min_{\mathbf{P} \in \Pi} ||  \mathbf{A}_i - \mathbf{P} \, \mathbf{A}_j \, \mathbf{P}^\top ||_F \\
    &\leq ||  \mathbf{P}_i^* \, \mathbf{A}_i \, \mathbf{P}_i^{*\top} - \mathbf{P}_j^* \, \mathbf{A}_j \, \mathbf{P}_j^{*\top} ||_F \, .
  \end{split}
\end{equation}
Given two graphs $G_i,G_j$, Equation~\eqref{eq:dist_ineq} implies that the permutation matrices $\mathbf{P}_i^*,\mathbf{P}_j^*$ embed the adjacency matrices $\mathbf{A}_i, \mathbf{A}_j$ in a space at least as separable as the one provided from the Frobenius distances.

\subsection{Learning soft permutations}
Inspired by the above alignment problem, in this paper, we propose a neural network model that performs an alignment of the input graphs and embeds them into a Euclidean space.
Specifically, we propose a model that learns a unique permutation matrix for each graph $G_i$ from the collection of input graphs $G_1,\ldots,G_N$.
These permutation matrices enable the model to project graphs into a common vector space.

In fact, the problem of learning permutation matrices $\mathbf{P}_1,\ldots,\mathbf{P}_N \in \Pi$ has a combinatorial nature and is not feasible in practice.
Therefore, in our formulation, we replace the space of permutations by the space of doubly stochastic matrices.
Such approximate relaxations are common in graph matching algorithms \cite{aflalo2015convex,jiang2017graph}.
Let $\mathcal{D}$ denote the set of $n \times n$ doubly stochastic matrices, \ie nonnegative matrices with row and column sums each equal to $1$.
The proposed model associates each graph $G_i$ with a doubly stochastic matrix $\mathbf{D}_i \in \mathcal{D}$ and the vector representation of each graph is now given by $\mathbf{v}_i^{\text{adj}} = \text{vec} \big( \mathbf{D}_i \, \mathbf{A}_i \, \mathbf{D}_i^\top \big)$.
Note that the price to be paid for the above relaxation is that the emerging graph representations are less expressive since two non-isomorphic graphs can be mapped to the same vector.
In other words, there exist pairs of graphs $G_i, G_j$ with $G_i \not \simeq G_j$ and doubly stochastic matrices $\mathbf{D}_i, \mathbf{D}_j \in \mathcal{D}$ such that $\mathbf{D}_i \, \mathbf{A}_i \mathbf{D}_i^\top = \mathbf{D}_j \, \mathbf{A}_j \mathbf{D}_j^\top$.

To compute these doubly stochastic matrices, we capitalize on ideas from optimal transport \cite{peyre2019computational}.
Specifically, we design a neural network that learns matrix $\mathbf{D}_i$ from two sets of feature vectors, one that contains some structural features of the vertices (and potentially their attributes) of graph $G_i$ and one trainable matrix that is randomly initialized.
Let $n$ denote the number of vertices of the largest graph in the input set of graphs.
We first generate a matrix $\mathbf{Q}_i \in \mathbb{R}^{n \times d}$ for each graph $G_i$ of the collection which contains a number of local vertex features that are invariant to vertex renumbering (\eg degree, number of triangles, etc.).
Note that for a graph $G_i$ consisting of $n_i$ vertices, the last $n - n_i$ rows of matrix $\mathbf{Q}_i$ are initialized to the zero vector.
Let also $\mathbf{W} \in \mathbb{R}^{n \times d}$ denote a matrix of trainable parameters.
Note that we can think of matrix $\mathbf{W}$ as a matrix whose rows correspond to some latent vertices and which contain the features of those vertices.
Then, the rows of matrix $\mathbf{Q}_i$ are compared against those of matrix $\mathbf{W}$ using some differentiable function $f$.
In our experiments, we have defined $f$ as the inner product between the two input vectors followed by the ReLU activation function, thus we have that $\mathbf{S}_i = \text{ReLU}(\mathbf{Q}_i \, \mathbf{W}^\top) \in \mathbb{R}^{n \times n}$ where $\mathbf{S}_i$ is a matrix of scores or similarities between the vertices of graph $G_i$ and the model's trainable parameters. 

Note that before computing matrix $\mathbf{S}_i$, we can first transform the vertex features using a fully-connected layer, \ie $\tilde{\mathbf{Q}}_i = g(\mathbf{Q}_i \, \mathbf{H} + \mathbf{b})$ where $\mathbf{H} \in \mathbb{R}^{d \times \tilde{d}}$ and $\mathbf{b} \in \mathbb{R}^{\tilde{d}}$ is a weight matrix and bias vector, respectively, and $g$ is a non-linear activation function.
We can also use an MPNN architecture to produce new vertex representations, \ie $\tilde{\mathbf{Q}}_i = \text{MPNN}(\mathbf{A}_i, \mathbf{Q}_i)$.
In fact, the expressive power of the proposed model depends on how well those features capture the structural properties of vertices in the graph.
Thus, highly expressive \text{MPNNs} could lead the proposed model into generating more expressive representations.

For a graph $G_i$, matrix $\mathbf{D}_i \in [0,1]^{n \times n}$ can then be obtained by solving the following problem:
\begin{equation}
\begin{split}
  \max \displaystyle \sum_{j=1}^n &\sum_{k=1}^n \mathbf{S}_i^{j,k} \mathbf{D}_i^{j,k} \\
  & \textrm{s.t.} \\
  \mathbf{D}_i \, \mathbf{1}_n = \mathbf{1}_n \; &\textrm{ and } \; \mathbf{D}_i^\top \, \mathbf{1}_n = \mathbf{1}_n \, ,
\end{split}
\label{eq:opt}
\end{equation}
where $\mathbf{1}_n$ is an $n$-element vector of ones, and $\mathbf{S}_i^{j,k}$ and $\mathbf{D}_i^{j,k}$ denote the element of the $j$-th row and $k$-th column of matrices $\mathbf{S}_i$ and $\mathbf{D}_i$.
The emerging matrix $\mathbf{D}_i$ is a doubly stochastic matrix, while the above formulation is equivalent to solving a linear assignment problem.
The solution of the above optimization problem corresponds to the optimal transport \cite{peyre2019computational} between two discrete distributions with scores $\mathbf{S}_i$.
Its entropy-regularized formulation naturally results in the desired soft assignment, and can be efficiently solved on GPU with the Sinkhorn algorithm \cite{cuturi2013sinkhorn}.
It is a differentiable version of the Hungarian algorithm \cite{munkres1957algorithms}, classically used for bipartite matching, that consists in iteratively normalizing $\exp(\mathbf{S}_i)$ along rows and columns, similar to row and column softmax.

By solving the above linear assignment problem, we obtain the doubly stochastic matrix $\mathbf{D}_i$ associated with graph $G_i$.
Then, as described above we get $\mathbf{v}_i^{\text{adj}} = \text{vec} \big( \mathbf{D}_i \, \mathbf{A}_i \, \mathbf{D}_i^\top \big)$.
This $n^2$-dimensional vector can be used as features for various machine learning tasks, \eg graph regression or graph classification.
For instance, for a graph classification problem with $|\mathcal{C}|$ classes, the output is computed as:
\begin{equation*}
    \mathbf{p}_i = \text{softmax}(\mathbf{W}^{(c)} \, \mathbf{v}_i^{\text{adj}} + \mathbf{b}^{(c)}) \, ,
\end{equation*}
where $\mathbf{W}^{(c)} \in \mathbb{R}^{|\mathcal{C}| \times n^2}$ is a matrix of trainable parameters and $\mathbf{b}^{(c)} \in \mathbb{R}^{|\mathcal{C}|}$ is the bias term.
We can even create a deeper architecture by adding more fully-connected layers.
Since we have imposed some ``soft'' ordering on the vertices of each graph, we could also treat the matrix $\mathbf{D}_i \, \mathbf{A}_i \, \mathbf{D}_i^\top$ as an image and apply some standard convolution operation where filters of dimension $h \times n$ (with $h < n$) are applied to the representations of $h$ vertices to produce new features.
The filters are applied to each possible sequence of vertices to produce feature maps of dimension $n-h+1$.
These feature maps can be fed to further convolution/pooling layers and finally to a fully-connected layer.


\subsection{Vertex attributes}
In case of graphs that contain vertex attributes, the graphs' adjacency matrices do not incorporate the vertex information provided by the attributes.
There is thus a need for taking these attributes into account.
Let $\mathbf{X} \in \mathbb{R}^{n \times d}$ denote the matrix of node features where $d$ is the feature dimensionality.
The feature of a given node $v_i$ corresponds to the $i$-th row of $\mathbf{X}$.
To produce a representation of the graph that takes into account both the structure and the vertex attributes, we can first compute a second matrix of scores or similarities $\mathbf{S}_i^{att}$ between the attributes of the vertices of graph $G_i$ and some trainable parameters $\mathbf{W}^{att} \in \mathbb{R}^{n \times d}$, as follows: $\mathbf{S}_i^{att} = \text{ReLU}(\mathbf{X}_i \, \mathbf{W}^{att \top}) \in \mathbb{R}^{n \times n}$. 
We can then solve a problem similar to that of Equation~\eqref{eq:opt} and obtain matrix $\mathbf{D}_i^{att}$.
Then, the ``soft'' permutation matrix can be obtained as follows:
\begin{equation}
    \mathbf{D}_i = \sigma(\alpha) \mathbf{D}_i^{adj} + \big(1 - \sigma(\alpha) \big) \mathbf{D}_i^{att} \, ,
    \label{eq:combination}
\end{equation}
where $\alpha$ is a trainable parameter and $\sigma$ is the sigmoid function.
Thus, each element of $\mathbf{D}_i$ is a convex combination of the elements of the two matrices $\mathbf{D}_i^{adj}$ and $\mathbf{D}_i^{att}$.
The above can be seen as an attention mechanism that allows the model to focus more on the adjacency matrix or on the vertex attributes.
We can then use the learned doubly stochastic matrices to also explicitly map the vertex attributes into a vector space.
Therefore, for some graph $G_i$, we produce a second vector as follows:
\begin{equation*}
  \mathbf{v}_i^{\text{att}} = \text{vec} \big( \mathbf{D}_i \, \mathbf{X}_i \big) \, ,
\end{equation*}
where $\mathbf{v}_i^{\text{att}} \in \mathbb{R}^{nd}$.
If edge attributes are also available, the adjacency matrix $\mathbf{A}_i$ can be represented as a three-dimensional tensor (\ie $\mathbf{A}_i \in \mathbb{R}^{n \times n \times d_e}$ where $d_e$ is the dimension of edge attributes).
We can, then, apply the ``soft'' permutation to this tensor and then map the emerging tensor into a vector.

\subsection{Scaling to large graphs.}
A major limitation of the proposed model is that its complexity depends on the vertex cardinality of the largest graph contained in the input set.
For instance, if there is a single very large graph (with cardinality $n$), and the number of vertices of the remaining graphs is much smaller than $n$, the model will learn $n \times n$ doubly stochastic matrices and all the graphs will be mapped to vectors in $\mathbb{R}^{n^2}$ even though they could have been projected to a lower-dimensional space.
This problem can be addressed by shrinking the adjacency matrix of the largest graph (\eg by removing some vertices and their adjacent edges).
However, this approach seems problematic since it results into loss of information.

To deal with large graphs, we propose to reduce the number of rows of matrix $\mathbf{W}$ from $n$ to some value $p < n$, \ie $\mathbf{W} \in \mathbb{R}^{p \times d}$.
The above will produce a rectangular (and not square) similarity matrix $\mathbf{S}_i \in \mathbb{R}^{n \times p}$ which will then lead to a rectangular row stochastic correspondence matrix that fulfills the following constraints (instead of the ones of Equation~\eqref{eq:opt}):
\begin{equation}
  \mathbf{D}_i \, \mathbf{1}_p = \mathbf{1}_n \quad \text{and} \quad \mathbf{D}_i^\top \, \mathbf{1}_p = (n/p) \mathbf{1}_n\, .
  \label{eq:constraints}
\end{equation}
To obtain matrix $\mathbf{D}_i$, the model applies the Sinkhorn algorithm.
According to \cite{cuturi2013sinkhorn}, for a general square cost matrix $\mathbf{S}$ of dimension $n \times n$, the worst-case complexity of solving an Optimal Transport problem is $\mathcal{O}(n^3\log n)$.
However, the employed Sinkhorn distances algorithm \cite{cuturi2013sinkhorn} exhibits empirically a quadratic $\mathcal{O}(n^2)$ complexity with respect to the dimension $n$ of the cost matrix.
In our case, complexity is even smaller since $\mathbf{S}$ is not a square matrix, but a rectangular matrix.
Except for the application of the Sinkhorn algorithm, $\pi$-GNN also maps the adjacency matrix of $G_i$ into a vector, by performing the following operation $\mathbf{D}_i \, \mathbf{A}_i \mathbf{D}_i^\top$ which requires $\mathcal{O}(n^2 p + p^2 n)$ time, and since $p \leq n$, this corresponds to $\mathcal{O}(n^2 p)$ time.
Moreover, to project the features (if any) of $G_i$ into a vector, the model performs the following matrix multiplication $\mathbf{D}_i \, \mathbf{X}_i$ which takes $\mathcal{O}(pnd)$ time where $d$ is the dimension of the vertex representations.
These operations can be efficiently performed on a GPU.

\subsection{Dustbins}
For some tasks, not all vertices of an input graph need to be taken into account.
For instance, in some cases, just the existence of a single subgraph in a graph could be a good indicator of class membership.
Furthermore, some graphs may consist of fewer than $p$ vertices (\ie the rows of matrix $\mathbf{W}$).
To let the model suppress some vertices of the input graph and/or some latent vertices, we add to each set of vertices a dustbin so that unmatched vertices are explicitly assigned to it.
This technique is common in graph matching and has been employed in other models \cite{sarlin2020superglue}.
We expand matrix $\mathbf{S}_i$ to obtain $\bar{\mathbf{S}}_i \in \mathbb{R}^{(n+1) \times (p+1)}$ by appending a new row and column, the vertex-to-bin, bin-to-vertex and bin-to-bin similarity scores, filled with a single learnable parameter $z \in \mathbb{R}$:
\begin{equation*}
  \bar{\mathbf{S}}_i^{j,p+1} = \bar{\mathbf{S}}_i^{n+1,k} = \bar{\mathbf{S}}_i^{n+1,p+1} = z \qquad \forall j \in [n+1], k \in [p+1]\, .
\end{equation*}

While vertices of the input graph (resp. latent vertices) will be assigned to a single latent vertex (resp. vertex of the input graph) or the dustbin, each dustbin has as many matches as there are vertices in the other set.
We denote as $\mathbf{a} = [\mathbf{1}_n^\top \, p]^\top$ and $\mathbf{b} = [\mathbf{1}_p^\top \, n]^\top$ the number of expected matches for each vertex and dustbin in the two sets of vertices (\ie input vertices and latent vertices).
The expanded matrix $\bar{\mathbf{D}}_i$ now has the constraints:
\begin{equation*}
  \bar{\mathbf{D}}_i \, \mathbf{1}_{p+1} = \mathbf{a} \quad \text{and} \quad \bar{\mathbf{D}}_i^\top \, \mathbf{1}_{n+1} = \mathbf{b} \, .
\end{equation*}
After the linear assignment problem has been solved, we can drop the dustbins and recover $\mathbf{D}_i$, thus we can retain the first $n$ rows and first $k$ columns of matrix $\bar{\mathbf{D}}_i$.



\section{Experimental Evaluation}\label{sec:experiments}
In this section, we evaluate the performance of the proposed $\pi$-GNN model on a synthetic dataset, but also on real-world graph classification and graph regression datasets.
We also evaluate the runtime performance and scalability of the proposed model.

\subsection{Synthetic Dataset}
To empirically verify that the proposed model can learn representations of high quality, we generated a dataset that consists of $191$ small graphs and for each pair of these graphs, we computed the Frobenius distance by solving the problem of Equation~\eqref{eq:distance}.
Since the Frobenius distance function is intractable for large graphs, we generated graphs consisting of at most $9$ vertices.
Furthermore, each graph is connected and contains at least $1$ edge.
The dataset consists of different types of synthetic graphs.
These include simple structures such as cycle graphs, path graphs, grid graphs, complete graphs and star graphs, but also randomly-generated graphs such as Erd{\H{o}}s-R{\'e}nyi graphs, Barab{\'a}si-Albert graphs and Watts-Strogatz graphs.
Table~\ref{tab:statistics_synthetic} shows statistics of the synthetic dataset that we used in our experiments.

There are $\nicefrac{191*192}{2} = 18,336$ pairs of graphs in total (including pairs consisting of a graph and itself).
Based on this dataset, we generated a regression problem where the task is to predict the number of vertices of the input graph.
In other words, the target of a graph $G$ that consists of $n$ vertices is $y=n$.

\begin{table}[t]
  \centering
  \footnotesize
  \caption{Summary of the synthetic dataset that we used in our experiments.}
  \label{tab:statistics_synthetic}
  \def\arraystretch{1.1}
    \begin{tabular}{l|c} 
      \toprule
      \multicolumn{2}{c}{Synthetic Dataset} \\ 
      \midrule
      Max \# vertices & $9$ \\
      Min \# vertices & $2$ \\
      Average \# vertices & $7.29$ \\ \hline
      Max \# edges & $36$ \\
      Min \# edges & $1$ \\
      Average \# edges & $11.34$ \\ \hline
      \# graphs & $191$ \\
      \bottomrule
    \end{tabular}
\end{table}

We annotated the vertices of all graphs with two features (degree and number of triangles in which a vertex participates), we split the dataset into a training and a validation set, and we trained an instance of the proposed $\pi$-GNN model on the dataset.
We stored the model that achieved the lowest validation loss in the disk and we then retrieved it and performed a forward pass to obtain the representations $\mathbf{v}_i^{\text{adj}}$, $i \in \{1,\ldots,191\}$ of all the graphs that are contained in the dataset.
Given these representations, we computed the distance between each pair of graphs $G_i, G_j$ as $d(G_i, G_j) = || \mathbf{v}_i^{\text{adj}} - \mathbf{v}_j^{\text{adj}} ||_2$.
We then compared these distances against the Frobenius distances between all pairs of graphs.
We experimented with $6$ different values for the number of latent vertices, \ie $5,7,9,11,13$ and $15$.
Note that the largest graph in our dataset consists of $9$ vertices.
To assess how well the proposed model approximates the Frobenius distance function, we employed two evaluation metrics: the mean squared error (MSE) and the Pearson correlation coefficient.
Figure~\ref{fig:mse} illustrates the achieved MSE values and correlation coefficients for the different number of latent vertices.
We compared the proposed model against the following two baselines: ($1$) \textit{random}: this method randomly generates a permutation matrix for each graph, and the permutation matrix is applied to the  graph's adjacency matrix.
Then, the distance between two graphs is computed as the Frobenius norm of the difference of the emerging matrices. ($2$) \textit{uniform}: this baseline generates for all graphs a uniform ``soft'' permutation matrix (\ie all the elements are set equal to $\nicefrac{1}{9}$).
Then, once again, the distance between two graphs is computed as the Frobenius norm of the difference of the emerging matrices.
Besides these two baselines, we also compare the proposed model against GCN \cite{kipf2016semi} and GIN \cite{xu2019how}.
We treat the output of the readout function as the vector representation of a graph.
Then, the distance between two graphs is defined as the Euclidean distance between the graphs' representations.
We observe that for a number of latent vertices equal to $7$ and $9$, the representations produced by the proposed model achieve the lowest MSE values (less than $1$) which demonstrates that they can learn high-quality representations that produce meaningful graph distances.
On the other hand, most of the baselines fail to generate graph representations that can yield distances similar to those produced by the Frobenius distance function (MSE greater than $4$ for most of them).
Furthermore, for all considered number of latent vertices, the distances that emerge from the representations learned by the proposed model are very correlated with the ground-truth distances (correlation approximately equal to $0.9$) and more correlated than the distances produced by any other method.

\begin{figure*}[t]
    \centering
    \includegraphics[width=.8\textwidth]{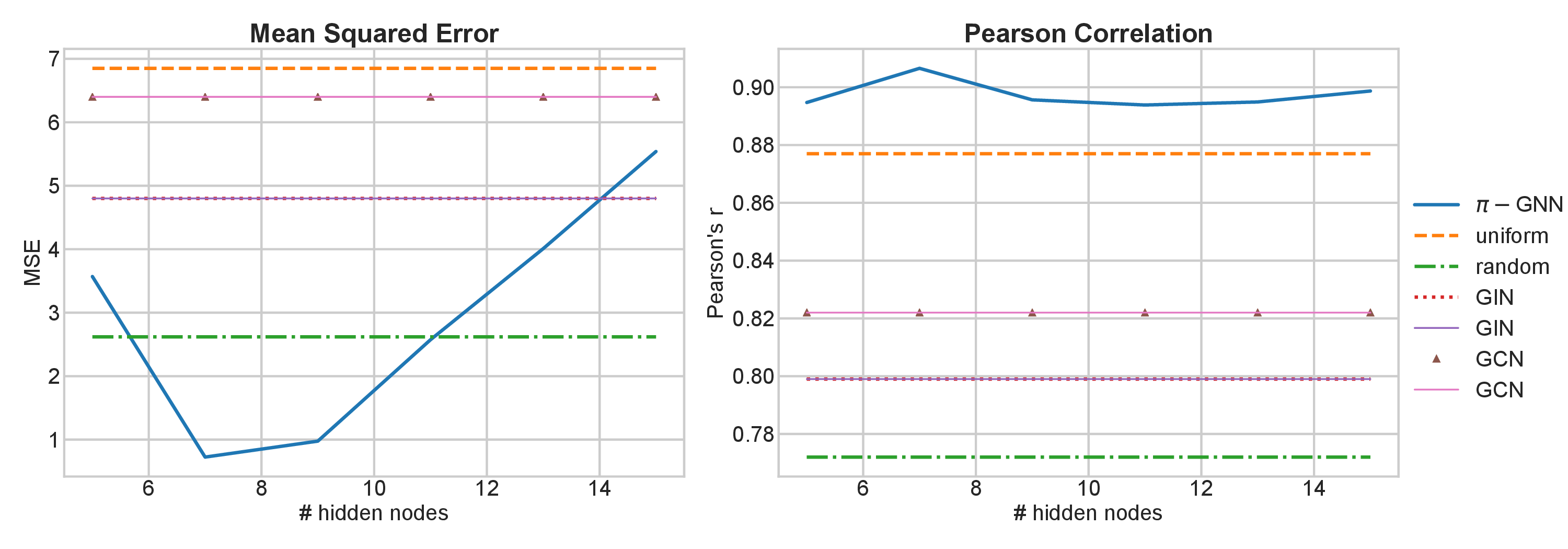}
    \caption{Mean Squared Error and Pearson Correlation of the Frobenius distances with respect to the number of latent nodes}
    \label{fig:mse}
\end{figure*}

In Figure~\ref{fig:heatmap}, we also provide a heatmap that illustrates the $191 \times 191$ matrix of Frobenius distances (left) where the element in the $i$-th row and the $j$-th column corresponds to the Frobenius distance between graphs $G_i$ and $G_j$, and the matrix of distances produced by the proposed model (right).
Clearly, the corresponding values in the two matrices are close to each other, which demonstrates that on datasets that contain small graphs, the proposed model can indeed accurately capture the distance between them.
Due to the prohibitive computational complexity of Frobenius distance, we could not experimentally verify that this also holds in the case of larger graphs.

\begin{figure}[t]
  \centering
  \includegraphics[width=.5\textwidth]{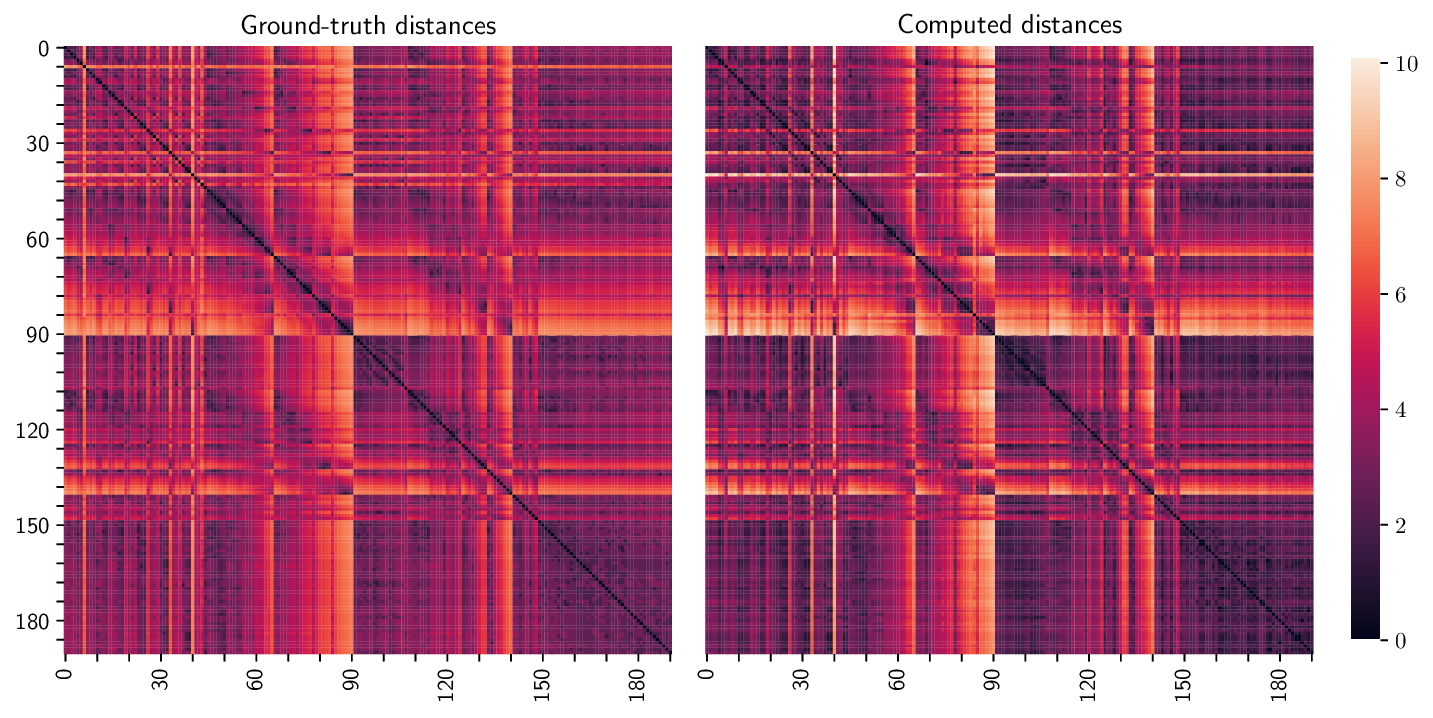}
  \caption{A heatmap of distances produced by the function of Equation~\eqref{eq:distance} and by the proposed model.}
  \label{fig:heatmap}
\end{figure}

\subsection{Real-World Datasets}
\noindent\textbf{Datasets.}
We evaluated the proposed model on the following well-established graph classification benchmark datasets: MUTAG, D\&D, NCI1, PROTEINS, ENZYMES, IMDB-BINARY, IMDB-MULTI, REDDIT-BINARY, REDDIT-MULTI-5K,  and COLLAB \cite{morris2020tudataset}.
A summary of the $10$ datasets is given in Table~\ref{tab:statistics}.
\begin{table*}[t]
  \centering
  \scriptsize
  \caption{Summary of the $10$ datasets that were used in our experiments.}
  \label{tab:statistics}
  \def\arraystretch{1.1}
  \begin{tabular}{lcccccccccc}
  \toprule
  \multirow{2}{*}{\textbf{Dataset}} & \multirow{2}{*}{MUTAG} & \multirow{2}{*}{D\&D} & \multirow{2}{*}{NCI1} & \multirow{2}{*}{PROTEINS} & \multirow{2}{*}{ENZYMES} & IMDB & IMDB & REDDIT & REDDIT & \multirow{2}{*}{COLLAB} \\ 
  & & & & & & BINARY & MULTI & BINARY & MULTI-5K &  \\
  \midrule
  Max \# vertices & 28 & 5,748 & 111 & 620 & 126 & 136 & 89 & 3,782 & 3,648 & 492 \\
  Min \# vertices & 10 & 30 & 3 & 4 & 2 & 12 & 7 & 6 & 22 & 32 \\
  Average \# vertices & 17.93 & 284.32 & 29.87 & 39.05 & 32.63 & 19.77 & 13.00 & 429.61 & 508.50 & 74.49 \\ 
  Max \# edges & 33 & 14,267 & 119 & 1,049 & 149 & 1,249 & 1,467 & 4,071 & 4,783 & 40,119 \\
  Min \# edges & 10 & 63 & 2 & 5 & 1 & 26 & 12 & 4 & 21 & 60 \\
  Average \# edges & 19.79 & 715.66 & 32.30 & 72.81 & 62.14 & 96.53 & 65.93 & 497.75 & 594.87 & 2,457.34 \\ 
  \# labels & 7 & 82 & 37 & 3 & -- & -- & -- & -- & -- & -- \\ 
  \# attributes & -- & -- & -- & -- & 18 & -- & -- & -- & -- & -- \\ 
  \# graphs & 188 & 1,178 & 4,110 & 1,113 & 600 & 1,000 & 1,500 & 2,000 & 4,999 & 5,000 \\ 
  \# classes & 2 & 2 & 2 & 2 & 6 & 2 & 3 & 2 & 5 & 3 \\
  \bottomrule
  \end{tabular}
\end{table*}
MUTAG contains $188$ mutagenic aromatic and heteroaromatic nitro compounds and the task is to predict whether or not each chemical compound has mutagenic effect on the Gram-negative bacterium {\it Salmonella typhimurium} \cite{debnath1991structure}.
D\&D contains more than one thousand protein structures whose nodes correspond to amino acids and edges to amino acids that are less than $6$ \AA ngstroms apart.
The task is to predict if a protein is an enzyme or not \cite{dobson2003distinguishing}.
NCI1 consists of several thousands of chemical compounds screened for activity against non-small cell lung cancer and ovarian cancer cell lines \cite{wale2008comparison}.
PROTEINS contains proteins, and again, the task is to classify proteins into enzymes and non-enzymes \cite{borgwardt2005protein}.
ENZYMES contains $600$ protein tertiary structures represented as graphs obtained from the BRENDA enzyme database, and the task is to assign the enzymes to their classes (Enzyme Commission top level enzyme classes) \cite{borgwardt2005protein}.
IMDB-BINARY and IMDB-MULTI consist of graphs that correspond to movie collaboration networks.
Each graph is the ego-network of an actor/actress, and the task is to predict which genre an ego-network belongs to \cite{yanardag2015deep}.
REDDIT-BINARY and REDDIT-MULTI-5K contain graphs that model interactions between users of Reddit.
Each graph represents an online discussion thread, and the task is to classify graphs into either communities or subreddits \cite{yanardag2015deep}.
COLLAB is a scientific collaboration dataset that consists of the ego-networks of researchers from three subfields of Physics, and the task is to determine the subfield of Physics to which the ego-network of each researcher belongs \cite{yanardag2015deep}.

\begin{table*}[t]
\caption{Classification accuracy ($\pm$ standard deviation) of the proposed model and the baselines on the $10$ benchmark datasets. OOR means Out of Resources, either time ($>$72 hours for a single training) or GPU memory.}
\label{tab:classification_results}
\centering
\footnotesize
\def\arraystretch{1.1}
\begin{tabular}{lccccc}
\toprule
& \textbf{MUTAG} & \textbf{D\&D} & \textbf{NCI1} & \textbf{PROTEINS} & \textbf{ENZYMES} \\
\midrule
DGCNN & 84.0 ($\pm$ 6.7) & 76.6 ($\pm$ 4.3) & 76.4 ($\pm$ 1.7) & 72.9 ($\pm$ 3.5) & 38.9 ($\pm$ 5.7) \\ 
DiffPool & 79.8 ($\pm$ 7.1) & 75.0 ($\pm$ 3.5) & 76.9 ($\pm$ 1.9) & \textbf{73.7} ($\pm$ 3.5) & 59.5 ($\pm$ 5.6) \\ 
ECC & 75.4 ($\pm$ 6.2) & 72.6 ($\pm$ 4.1) & 76.2 ($\pm$ 1.4) & 72.3 ($\pm$ 3.4) & 29.5 ($\pm$ 8.2) \\ 
GIN & 84.7 ($\pm$ 6.7) & 75.3 ($\pm$ 2.9) & \textbf{80.0} ($\pm$ 1.4) & 73.3 ($\pm$ 4.0) & 59.6 ($\pm$ 4.5) \\ 
GraphSAGE & 83.6 ($\pm$ 9.6) & 72.9 ($\pm$ 2.0) & 76.0 ($\pm$ 1.8) & 73.0 ($\pm$ 4.5) & 58.2 ($\pm$ 6.0) \\
\midrule
$\pi$-GNN & \textbf{86.1} ($\pm$ 8.4) & 77.7 ($\pm$ 3.7) & 76.0 ($\pm$ 1.7) & 73.6 ($\pm$ 3.5) & \textbf{60.3} ($\pm$ 4.1) \\
$\pi$-GNN-$d$ & 84.9 ($\pm$ 5.7) & \textbf{78.1} ($\pm$ 3.4) & 76.7 ($\pm$ 1.7) & 72.2 ($\pm$ 3.8) & 56.8 ($\pm$ 6.1) \\
\bottomrule
\end{tabular}
\\
\vspace{.1cm}
\begin{tabular}{lccccc}
\toprule
& \textbf{IMDB-B} & \textbf{IMDB-M} & \textbf{REDDIT-B} & \textbf{REDDIT-5K} & \textbf{COLLAB} \\
\midrule
DGCNN & 69.2 ($\pm$ 3.0) & 45.6 ($\pm$ 3.4) & 87.8 ($\pm$ 2.5) & 49.2 ($\pm$ 1.2) & 71.2 ($\pm$ 1.9) \\ 
DiffPool & 68.4 ($\pm$ 3.3) & 45.6 ($\pm$ 3.4) & 89.1 ($\pm$ 1.6) & 53.8 ($\pm$ 1.4) & 68.9 ($\pm$ 2.0) \\ 
ECC & 67.7 ($\pm$ 2.8) & 43.5 ($\pm$ 3.1) & OOR & OOR & OOR \\ 
GIN & 71.2 ($\pm$ 3.9) & \textbf{48.5} ($\pm$ 3.3) & 89.9 ($\pm$ 1.9) & \textbf{56.1} ($\pm$ 1.7) & 75.6 ($\pm$ 2.3) \\ 
GraphSAGE & 68.8 ($\pm$ 4.5) & 47.6 ($\pm$ 3.5) & 84.3 ($\pm$ 1.9) & 50.0 ($\pm$ 1.3) & 73.9 ($\pm$ 1.7) \\
\midrule
$\pi$-GNN & 70.4 ($\pm$ 3.0) & \textbf{48.5} ($\pm$ 3.5) & \textbf{90.0} ($\pm$ 1.2) & 53.2 ($\pm$ 1.5) & 73.1 ($\pm$ 1.2) \\
$\pi$-GNN-$d$ & \textbf{71.5} ($\pm$ 4.0) & 47.6 ($\pm$ 3.4) & 87.9 ($\pm$ 1.8) & 49.1 ($\pm$ 2.7) & \textbf{75.7} ($\pm$ 1.7) \\
\bottomrule
\end{tabular}
\end{table*}

We also assessed the proposed model's effectiveness on two graph classification datasets from the Open Graph Benchmark (OGB) \cite{hu2020open}, a collection of challenging  large-scale datasets.
Specifically, we experimented with two molecular property prediction datasets: ogbg-molhiv and ogbg-molpcba.
Ogbg-molhiv  is a collection of graphs, that represent molecules \cite{molecules}.
It contains $41,127$ graphs with an average number of $25.5$ nodes per graph and an average number of $27.5$ edges per graph.
Nodes are atoms and the edges correspond to chemical bonds between atoms.
The graphs contain node features, that are processed as in \cite{hu2020open}.
The evaluation metric is ROC-AUC.
Ogbg-molpcba is another molecular property prediction dataset from \cite{hu2020open}.
It contains $437,929$ graphs with an average number of $26$ nodes per graph and anverage number of $28$ edges per graph.
Each graph corresponds to a molecule, where nodes are atoms and edges show the chemical bonds.
The node features are $9$-dimensional and the end task contains $128$ sub-tasks.
The evaluation metric is Average Precision due to the skewness of the class balance.

For the regression task, we conducted an experiment on the QM9 dataset \cite{ramakrishnan2014quantum}.
The QM9 dataset contains approximately $134k$ organic molecules \cite{ramakrishnan2014quantum}.
Each molecule consists of Hydrogen (H), Carbon (C), Oxygen (O), Nitrogen (N), and Flourine (F) atoms and contain up to $9$ heavy (non Hydrogen) atoms. 
Furthermore, each molecule has $12$ target properties to predict.

\begin{table}[t]
    \centering
    \footnotesize
    \def\arraystretch{1.1}
    \caption{Performance on the ogbg-molhiv and ogbg-molpcba datasets.}
    \begin{tabular}{lcc}
    \toprule
              & \textbf{ogbg-molhiv }     & \textbf{ogbg-molpcba}     \\ 
              & ROC-AUC & Avg. Precision \\
    \midrule          
    DGN       & $79.70\pm 0.97$  & $28.85 \pm 0.30$ \\ 
    PNA       & $79.02\pm 1.32$  & $28.38 \pm 0.35$ \\ 
    PHC-GNN   & $79.34\pm 1.16$  & $29.47 \pm 0.26$ \\ 
    GCN       & $76.06 \pm 0.97$ & $20.20 \pm 0.24$ \\ 
    GIN       & $75.58 \pm 1.40$ & $22.66 \pm 0.28$ \\ \hline
    $\pi$-GNN  & $79.12 \pm 1.50$ & $28.11 \pm 0.32$ \\ 
    $\pi$-GNN-$d$ & $79.09 \pm 1.31$ & $28.22 \pm 0.41$ \\
    \bottomrule
    \end{tabular}   
    \label{tab:ogbg_results}
\end{table}

\noindent\textbf{Experimental Setup.}
In the first part of the graph classification experiments, we compare $\pi$-GNN against the following five MPNNs: ($1$) DGCNN \cite{zhang2018end}, ($2$) DiffPool \cite{ying2018hierarchical}, ($3$) ECC \cite{simonovsky2017dynamic}, ($4$) GIN \cite{xu2019how}, and ($5$) GraphSAGE \cite{hamilton2017inductive}.
To evaluate the different methods, we employ the framework proposed in \cite{errica2020fair}.
Therefore, we perform $10$-fold cross-validation and within each fold a model is selected based on a $90\%/10\%$ split of the training set.
Since we use the same splits as in \cite{errica2020fair}, we provide the results reported in that paper for all the common datasets.

In the case of the OGB datasets, we compare $\pi$-GNN against the following models that have achieved top places on the OGB graph classification leaderboard: GCN \cite{kipf2016semi}, GIN \cite{xu2019how}, PNA \cite{corso2020principal}, DGN \cite{beaini2020directional}, and PHC-GNN \cite{le2021parameterized}.
All these baselines belong to the family of MPNNs.
Both datasets are already split into training, validation, and test sets, while all reported results are averaged over $10$ runs.

In the graph regression task, we compare the proposed model against the following five models: ($1$) DTNN \cite{wu2018moleculenet}, ($2$) MPNN \cite{wu2018moleculenet}, ($3$) $1$-$2$-GNN \cite{morris2019weisfeiler}, ($4$) $1$-$2$-$3$-GNN \cite{morris2019weisfeiler}, and ($5$) PPGN \cite{maron2019provably}.
The dataset is randomly split into $80\%$ train, $10\%$ validation and $10\%$ test.
We trained a different network for each quantity. 
For the baselines, we use the results reported in the respective papers.

\begin{table*}[t]
\caption{Mean absolute errors of the proposed model and the baselines on the QM9 dataset.}
\label{tab:regression_results}
\centering
\footnotesize
\def\arraystretch{1.1}
\begin{tabular}{lccccccc}
    \toprule
    \multirow{3}{*}{\vspace*{8pt}\textbf{Target}} & \multicolumn{7}{c}{\textbf{Method}} \\
    \cmidrule{2-8}
    & \textbf{DTNN} & \textbf{MPNN} & \textbf{$1$-$2$-GNN} & \textbf{$1$-$2$-$3$-GNN} & \textbf{PPGN} & \textbf{$\pi$-GNN} & \textbf{$\pi$-GNN-$d$} \\
    \midrule
    $\mu$ & 0.244 & 0.358 & 0.493 & 0.476 & \textbf{0.0934} & 0.536 & 0.538 \\
    $\alpha$ & 0.95 & 0.89 & \textbf{0.27} & \textbf{0.27} & 0.318 & 0.374 & 0.372 \\
    $\varepsilon_{\text{HOMO}}$ & 0.00388 & 0.00541 & 0.00331 & 0.00337 & \textbf{0.00174} & 0.00394 & 0.00394 \\
    $\varepsilon_{\text{LUMO}}$ & 0.00512 & 0.00623 & 0.00350 & 0.00351 & \textbf{0.0021} & 0.00419 & 0.00421 \\
    $\Delta\varepsilon$ & 0.0112 & 0.0066 & 0.0047 & 0.0048 & \textbf{0.0029} & 0.0055 & 0.0055 \\
    $\langle R^2 \rangle$ & 17.0 & 28.5 & 21.5 & 22.9 & \textbf{3.78} & 26.34 & 26.16 \\
    \textsc{ZPVE} & 0.00172 & 0.00216 & \textbf{0.00018} & 0.00019 & 0.000399 & 0.000235 & 0.000262 \\
    $U_0$ & 2.43 & 2.05 & 0.0357 & 0.0427 & 0.022 & \textbf{0.0210} & \textbf{0.0210} \\
    $U$ & 2.43 & 2.00 & 0.107 & 0.111 & 0.0504 & 0.0255 & \textbf{0.0244} \\
    $H$ & 2.43 & 2.02 & 0.070 & 0.0419 & 0.0294 & 0.0216 & \textbf{0.0202} \\
    $G$ & 2.43 & 2.02 & 0.140 & 0.0469 & 0.024 & 0.0211 & \textbf{0.0203} \\
    $C_{\text{v}}$ & 0.27 & 0.42 & 0.0989 & \textbf{0.0944} & 0.144 & 0.162 & 0.165 \\
    \bottomrule
  \end{tabular}
\end{table*}

For the proposed $\pi$-GNN, we provide results for two different instances: $\pi$-GNN, and $\pi$-GNN-$d$ that correspond to models without and with dustbins, respectively.
In all our experiments, we annotate each vertex with two structural features: (i) its degree, and (ii) the number of triangles in which it participates.
If vertices are already annotated with attributes, we compute the ``soft'' permutation matrix of Equation~\eqref{eq:combination}.
In case vertices are annotated with discrete labels, we first map these labels to one-hot vectors.

For all standard graph classification datasets, we set the batch size to $64$ and the number of epochs to $300$.
We use the Adam optimizer with a learning rate equal to $10^{-3}$.
Layer normalization \cite{ba2016layer} is applied on the  $\mathbf{v}_i^{\text{adj}}$ and $\mathbf{v}_i^{\text{att}}$ (if any) vector representations of graphs, and the two outputs are fed  to two separate two layer MLPs with hidden-dimension sizes of $256$ and $128$.
The two emerging vectors are then concatenated and further fed to a final two layer MLP. 
The hyperparameters we tune for each dataset and model are the number of latent vertices $\in \{20,30\}$, and the hidden-dimension size of the fully-connected layer we employ to transform the vertex features $\in \{32,64\}$ .
For the experiments on the OGB datasets, we set the batch size to $128$ for ogbg-molhiv and we choose the batch size from $\{128,256\}$ for ogbg-molpcba.
Moreover, we choose the number of latent vertices from $\{20,30,40\}$, while we set the hidden-dimension size of the fully-connected layer that transforms the vertex features to $128$.
All the other experimental settings are same as above.
For the QM9 dataset, we set the batch size to $128$, the number of latent vertices to $40$, the hidden-dimension size of the vertex features to $128$, and we use an adaptive learning rate decay based on validation performance.
All the other experimental settings are same as above. 

The model was implemented with PyTorch \cite{paszke2019pytorch}, and all experiments were run on an NVidia GeForce RTX $2080$ GPU.
The code is available at \url{https://github.com/giannisnik/pi-GNN}.

\noindent\textbf{Graph classification results.}
For the standard graph classification datasets, we report in Table~\ref{tab:classification_results} average prediction accuracies and standard deviations.
We observe that the proposed model achieves the highest classification accuracy on $7$ out of $10$ datasets.
Furthermore, it yields the second best accuracy on $1$ dataset and the third best accuracy on the the remaining $2$ datasets.
On the MUTAG and D\&D datasets, it offers respective absolute improvements of $1.4\%$, and $2.8\%$ in accuracy over the best competitor.
With regards to the rest of the models, GIN also achieves high levels of performance and it outperforms all the other models on $2$ datasets.
On those two datasets (NCI$1$ and REDDIT-$5$K), GIN significantly outperforms the proposed model.
Between the two variants of the proposed model, none of them consistently outperforms the other, however, $\pi$-GNN achieves higher levels of accuracy in most cases.
Interestingly, it appears that the performance of the two variants depends on the type of graphs and the associated task.
For example, $\pi$-GNN significantly outperforms $\pi$-GNN-$d$ on both REDDIT datasets.

Table~\ref{tab:ogbg_results} summarizes the test scores of the proposed model and the baselines on the two OGB graph property prediction datasets.
On both datasets, the two $\pi$-GNN instances achieve high levels of performance, while they also achieve very similar performance to each other.
More specifically, on ogbg-molhiv, the two models outperform $3$ out of the $5$ baselines and achieve a test ROC-AUC score very close to that of the best performing model.
On ogbg-molpcba, they beat $2$ out of the $5$ baselines and again they reach similar levels of performance to that of the model that performed the best.

\noindent\textbf{Graph regression results.}
Table~\ref{tab:regression_results} illustrates mean absolute errors achieved by the different models on the QM9 dataset.
On $4$ out of $12$ targets both variants of the proposed model outperform the baselines providing evidence that $\pi$-GNN can also be very competitive in graph regression tasks.
On most of the remaining targets, the proposed model yields mean absolute errors slightly higher than those of the best performing methods.
However, on two targets ($\mu$ and $\langle R^2 \rangle$), it is largely outperformed by PPGN.
Overall, even though the proposed model utilizes two simple structural features, in most cases, its performance is on par with very expressive models (\eg $1$-$2$-$3$-GNN, PPGN) which are equivalent to higher-dimensional variants of the WL algorithm in terms of distinguishing non-isomorphic graphs.

\begin{table*}[t]
\caption{Classification accuracy ($\pm$ standard deviation) of the proposed model and the baselines on the $10$ benchmark datasets.}
\label{tab:classification_results_gin_gcn}
\centering
\footnotesize
\def\arraystretch{1.1}
\begin{tabular}{lccccc}
\toprule
& \textbf{MUTAG} & \textbf{D\&D} & \textbf{NCI1} & \textbf{PROTEINS} & \textbf{ENZYMES} \\
\midrule
$\pi$-GNN & 86.1 ($\pm$ 8.4) & 77.7 ($\pm$ 3.7) & 76.0 ($\pm$ 1.7) & \textbf{73.6} ($\pm$ 3.5) & \textbf{60.3} ($\pm$ 4.1) \\
$\pi$-GNN-$d$ & 84.9 ($\pm$ 5.7) & 78.1 ($\pm$ 3.4) & 76.7 ($\pm$ 1.7) & 72.2 ($\pm$ 3.8) & 56.8 ($\pm$ 6.1) \\
\midrule
$\pi$-GNN (GCN) & \textbf{86.3} ($\pm$ 8.7) & 75.4 ($\pm$ 2.9) & 76.3 ($\pm$ 1.7) & 73.5 ($\pm$ 3.7) & 56.0 ($\pm$ 5.3) \\
$\pi$-GNN-$d$ (GCN) & 83.9 ($\pm$ 6.8) & \textbf{78.2} ($\pm$ 2.9) & 75.9 ($\pm$ 2.3) & 73.1 ($\pm$ 4.3) & 55.6 ($\pm$ 4.0) \\
\midrule
$\pi$-GNN (GIN) & 85.9 ($\pm$ 8.4) & 76.1 ($\pm$ 2.2) & 80.4 ($\pm$ 1.1) & 72.7 ($\pm$ 3.4) & 52.8 ($\pm$ 4.9) \\
$\pi$-GNN-$d$ (GIN) & 83.8 ($\pm$ 7.3) & 75.2 ($\pm$ 4.1) & \textbf{80.6} ($\pm$ 2.0) & 72.4 ($\pm$ 2.9) & 50.9 ($\pm$ 3.4) \\
\bottomrule
\end{tabular}
\\
\vspace{.1cm}
\begin{tabular}{lccccc}
\toprule
& \textbf{IMDB-B} & \textbf{IMDB-M} & \textbf{REDDIT-B} & \textbf{REDDIT-5K} & \textbf{COLLAB} \\
\midrule
$\pi$-GNN & 70.4 ($\pm$ 3.0) & 48.5 ($\pm$ 357) & 90.0 ($\pm$ 1.2) & 53.2 ($\pm$ 1.5) & 73.1 ($\pm$ 1.2) \\
$\pi$-GNN-$d$ & \textbf{71.5} ($\pm$ 4.0) & 47.6 ($\pm$ 3.4) & 87.9 ($\pm$ 1.8) & 49.1 ($\pm$ 2.7) & \textbf{75.7} ($\pm$ 1.7) \\
\midrule
$\pi$-GNN (GCN) & 70.1 ($\pm$ 4.1) & \textbf{49.2} ($\pm$ 4.3) & 91.0 ($\pm$ 1.4) & 53.8 ($\pm$ 1.2) & 68.6 ($\pm$ 3.2) \\
$\pi$-GNN-$d$ (GCN) & 71.1 ($\pm$ 5.2) & 43.5 ($\pm$ 2.9) & 88.8 ($\pm$ 2.0) & 50.0 ($\pm$ 2.5) & 68.3 ($\pm$ 1.7) \\
\midrule
$\pi$-GNN (GIN) & 69.8 ($\pm$ 3.1) & 49.0 ($\pm$ 3.9) & \textbf{92.0} ($\pm$ 0.9) & \textbf{55.8} ($\pm$ 1.3) & 73.3 ($\pm$ 1.9) \\
$\pi$-GNN-$d$ (GIN) & 70.5 ($\pm$ 4.4) & 45.2 ($\pm$ 3.0) & 90.1 ($\pm$ 1.5) & 53.4 ($\pm$ 2.1) & 72.7 ($\pm$ 2.2) \\
\bottomrule
\end{tabular}
\end{table*}

\subsection{Runtime Analysis}
We next study how the empirical running time of the model varies with respect to the values of $n$ (\ie number of vertices of input graphs) and $p$ (\ie number of latent vertices).
We generated a dataset consisting of $200$ graphs.
All graphs are instances of the Erd{\H{o}}s-R{\'e}nyi graph model and consist of $n$ vertices, while they are divided into two classes (of equal size) where the graphs that belong to the first class are sparser than the ones that belong to the second.
In the first experiment, we set $p=10$ and we increase the number of vertices $n$ of the $200$ input graphs from $10$ vertices to $100$ vertices.
In the second experiment, we set $n=50$, and we increase the number of latent vertices $p$ from $10$ to $100$.
We set the batch size to $64$ and train the model for $100$ epochs.
In both cases, we measure the average running time per epoch.
The results are illustrated in Figure~\ref{fig:complexity}.
\begin{figure}[t]
    \centering
    \includegraphics[width=.5\textwidth]{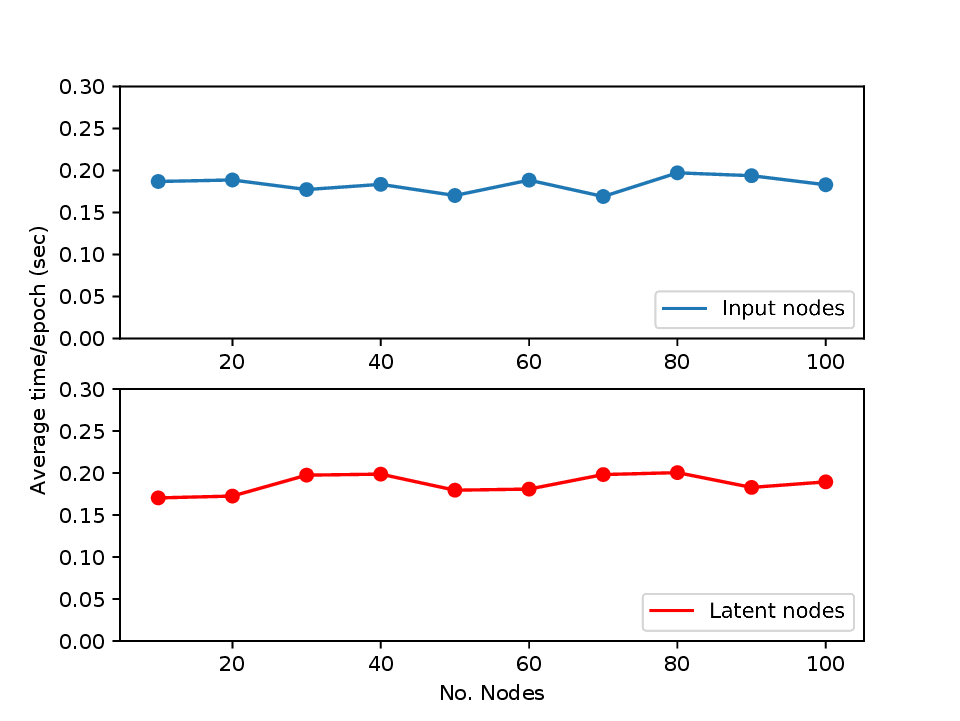}
    \caption{Average running time per epoch with respect to the number of vertices of the input graphs $n$ (top), and to the number of latent vertices $p$ (bottom).}
    \label{fig:complexity}
\end{figure}
The running time of the model seems to be independent of both the size of the input graphs and the number of latent vertices.
We hypothesize that this has to do with the fact that the model mainly performs standard matrix operations that can be efficiently parallelized on a GPU.

\subsection{MPNN-based Node Representations}
In this set of experiments, instead of annotating each node with the two structural features mentioned above, we use MPNNs to produce node representations.
Specifically, we experiment with two MPNN models: ($1$) GCN~\cite{kipf2016semi}; and  ($2$) GIN~\cite{xu2019how}.
We choose the number of message passing layers from $\{2, 3, 4\}$ for both models.
We report in Table~\ref{tab:classification_results_gin_gcn} average prediction accuracies and standard deviations on the $10$ graph classification datasets.
We observe that no model consistently outperforms the other models on the $10$ datasets.
The models that annotate nodes with the two structural features achieve the highest accuracy on $4$ out of the $10$ datasets.
On the other hand, the models that employ GIN to produce node representations outperform the rest of the models on $3$ datasets.
Interestingly, these models outperform the other models by wide margins on these $3$ datasets.
The models that employ GCN to generate node features achieve the highest accuracy also on $3$ datasets.
Overall, we observe that the approach followed to generate node features does not have a significant impact on the performance of the model, thus demonstrating that the model captures different properties of graphs than MPNNs. 

\section{Conclusion}\label{sec:conclusion}
In this paper, we first identified some limitations of graph embedding approaches, and we then proposed $\pi$-GNN, a novel GNN architecture which learns a ``soft'' permutation matrix for each input graph and uses this matrix to map the graph into a vector.
The ability of $\pi$-GNN to approximate the ground-truth graph distances was demonstrated through a synthetic experimental study.
Finally, the proposed model was evaluated on several graph classification and graph regression tasks where it performed on par with state-of-the-art models.


%

\appendices
\section{Proof of Theorem 1}

\begin{proof}
The $N \times N$ Euclidean distance matrix $\boldsymbol{\Delta}^2$ contains pairwise distances between all $N$ data points in a dataset.
Suppose a dataset consists of the $5$ graphs shown in Figure~\ref{fig:counter_example}.
Then, we obtain the following matrix of squared distances:
\begin{equation*}
  \boldsymbol{\Delta}^2 = \begin{bmatrix}
  d_{11}^2 & d_{12}^2 & d_{13}^2 & d_{14}^2 & d_{15}^2 \\
  d_{21}^2 & d_{22}^2 & d_{23}^2 & d_{24}^2 & d_{25}^2 \\
  d_{31}^2 & d_{32}^2 & d_{33}^2 & d_{34}^2 & d_{35}^2 \\
  d_{41}^2 & d_{42}^2 & d_{43}^2 & d_{44}^2 & d_{45}^2 \\
  d_{51}^2 & d_{52}^2 & d_{53}^2 & d_{54}^2 & d_{55}^2
  \end{bmatrix}
  = \begin{bmatrix}
  0 & 2 & 6 & 4 & 4 \\
  2 & 0 & 4 & 2 & 6 \\
  6 & 4 & 0 & 6 & 2 \\
  4 & 2 & 6 & 0 & 4 \\
  4 & 6 & 2 & 4 & 0
  \end{bmatrix}
\end{equation*}

The matrix of squared distances $\boldsymbol{\Delta}^2$ can be transformed into a matrix of similarities as follows:
\begin{equation*}
  \mathbf{K} = -\frac{1}{2} \mathbf{J} \, \boldsymbol{\Delta}^2 \, \mathbf{J}
\end{equation*}
where $\mathbf{J}$ is the centering matrix $\mathbf{J} = \mathbf{I} - \frac{1}{5} \mathbf{1} \, \mathbf{1}^\top \in \mathbb{R}^{5 \times 5}$ and $\mathbf{I}$ is the $5 \times 5$ identity matrix.
Matrix $\mathbf{K}$ is not positive definite or positive semidefinite since it has a negative eigenvalue, \ie $\min\{\lambda_1, \ldots, \lambda_5\} = -0.366$.
Therefore, it cannot be decomposed as the following product:
\begin{equation*}
  \mathbf{K} = \mathbf{X} \, \mathbf{X}^\top
\end{equation*}
and it cannot be the Gram matrix of any vector representations of the graphs $\mathbf{x}_1, \ldots, \mathbf{x}_5$.
Hence, the $5$ graphs cannot be embedded in any Euclidean space.
\end{proof}

\begin{figure}[t]
  \centering
  \includegraphics[width=.45\textwidth]{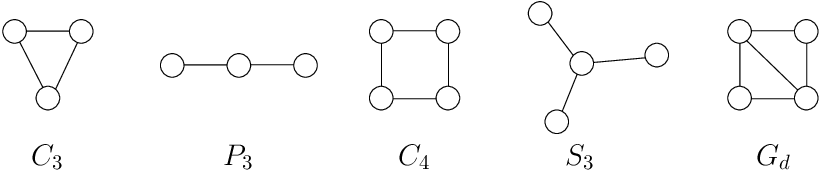}
  \caption{A set of $5$ graph which serve as a counterexample for the proof of Theorem 1.}
  \label{fig:counter_example}
\end{figure}

\section{Proof of Proposition 1}
\begin{proof}
  Without loss of generality, assume that the path graphs are sorted based on their order (\ie number of vertices).
  Then, the $i$-th row of matrix $\mathbf{X}$ contains the vector representation of graph $P_{i+1}$.
  Notice that given two consecutive path graphs $P_i, P_j$ with $j-i=1$, the first $(i-1)(n+1)+i$ elements of their representations are identical, while the rest of the elements of $P_i$ are equal to zero.
  Therefore, if we subtract the first vector from the second vector, we end up with a vector that has some nonzero elements in the position between $(i-1)(n+1)+i$ and $(j-1)(n+1)+j=i(n+1)+i+1$.
  Then, we can start from the last row of $\mathbf{X}$ and sequentially subtract from each row, the immediately preceding row.
  We obtain a set of $n$ orthogonal vectors (\ie the rows of $\mathbf{X}$) and therefore, the rank of matrix $\mathbf{X}$ is equal to $n$.
\end{proof}

\ifCLASSOPTIONcompsoc
  \section*{Acknowledgments}
\else
  \section*{Acknowledgment}
\fi

The authors would like to thank the NVidia corporation for the donation of a GPU as part of their GPU grant program.

\ifCLASSOPTIONcaptionsoff
  \newpage
\fi



\bibliographystyle{IEEEtran}
\bibliography{biblio}




\end{document}